\theoremstyle{definition}
\newtheorem{theorem}{Theorem}[section]
\newtheorem{proposition}[theorem]{Proposition}
\newtheorem{remark}[theorem]{Remark}
\newtheorem{definition}[theorem]{Definition}
\newtheorem{assumption}[theorem]{Assumption}
\numberwithin{equation}{section}
\newcommand{\hatd}[1]{{}}
\newcommand{\JOBID}[1]{241009-experiment_1-exponential_decay_kernel}
\newcommand{\JOBIDONLINE}[1]{241009-experiment_2-exponential_decay_kernel_decrease}
\tikzset{
    >=stealth',
    punkt/.style={
           rectangle,
           rounded corners,
           draw=black, very thick,
           text width=6.5em,
           minimum height=2em,
           text centered},
    pil/.style={
           ->,
           thick,
           shorten <=2pt,
           shorten >=2pt,}
}
\title{Deep Reinforcement Learning for Online Optimal Execution Strategies}
\author{
  Alessandro Micheli\thanks{Joint first authors.} \\
  Imperial College London \\
  London, UK \\
  \texttt{a.micheli19@imperial.ac.uk}
  \and
  Mélodie Monod\footnotemark[1] \\
  Imperial College London \\
  London, UK \\
  \texttt{melodie.monod18@imperial.ac.uk}
}
\date{}
\begin{document}

\maketitle

\begin{abstract}
\sloppy  This paper tackles the challenge of learning non-Markovian optimal execution strategies in dynamic financial markets. 
We introduce a novel actor-critic algorithm based on Deep Deterministic Policy Gradient (DDPG) to address this issue, with a focus on transient price impact modeled by a general decay kernel. 
Through numerical experiments with various decay kernels, we show that our algorithm successfully approximates the optimal execution strategy. Additionally, the proposed algorithm demonstrates adaptability to evolving market conditions, where parameters fluctuate over time.
Our findings also show that modern reinforcement learning algorithms can provide a solution that reduces the need for frequent and inefficient human intervention in optimal execution tasks.
\end{abstract}

%

\section{Introduction}

Optimal execution, a critical challenge in algorithmic trading, involves finding a strategy to execute sizable orders while minimizing market impact. 
Traditionally, researchers have relied on parametric models of \textit{price impact} to determine optimal execution strategies in closed-form (e.g. see \cite{Obizhaeva2013}). 
Despite its widespread adoption, this approach presents certain practical drawbacks.
Firstly, it comes with the risk of adopting potentially inaccurate assumptions about market dynamics embedded within price impact models. 
Secondly, it is inefficient as price models require periodic recalibration using the latest financial data to adapt to the ever-changing nature of markets. Doing so demands significant manual effort and oversight, potentially leading to increased transaction costs due to outdated calibrations that misestimate price impact.
As such, the traditional approach remains heavily reliant on human interventions and heuristic adjustments.






\textit{Reinforcement Learning} (RL) methods~\citep{russell2016artificial} are emerging as a powerful alternative to address the challenges of optimal execution. 
The introduction of the \textit{Double Deep Q-Network} (DDQN) algorithm marked a significant step forward, demonstrating the effectiveness of RL methods for optimal execution in a \textit{model-free} context~\citep{Ning2021} and in an \textit{online} time-varying liquidity environment~\citep{macri_lillo}. However, current state-of-the-art RL methods for optimal execution still have limitations that hinder their real-world application. 
First, due to the discrete nature of the action space in the DDQN algorithm, available execution strategies are limited in flexibility and complexity.
Furthermore, to the best of our knowledge, research has focused on market models with temporary and permanent impact components, which simplifies the problem by making it Markovian.
However, market microstructure research suggests that market impact is \textit{transient}, meaning it decays over time. This reality renders the Markovian assumption inapplicable to optimal execution problems with transient impact (see for example the analysis of \citet{AbiJaber2022}), making approaches from previous works \citep{Ning2021, macri_lillo} unsuitable for real-world use.




Motivated by these limitations, this work proposes a novel model-free algorithm with a continuous action space for optimal execution in the presence of transient price impact. We introduce a novel approach for optimal execution that leverages the \textit{Deep Deterministic Policy Gradient} (DDPG) algorithm~\citep{deep_dpg_lillicrap}. We address the limitations of existing methods through three key contributions.
Firstly, we leverage the DDPG algorithm's ability to handle continuous action spaces, allowing for decisions beyond simple binary buy/sell actions. 
To further enhance convergence, we introduce an auxiliary Q-function specifically designed for our optimal execution problem.
Secondly, unlike traditional model-based methods, our DDPG approach is model-free. This eliminates the need for strict assumptions about the market's price impact model. The algorithm's input space effectively adapts to non-Markovian problems with transient price impact governed by a general decay kernel.
Lastly, our DDPG algorithm is capable of online learning within and across episodes. This enables real-time adaptation of the optimal strategy in dynamic markets, removing the need for frequent recalibration.
The code to replicate our numerical experiments is freely available at~\url{https://github.com/ImperialCollegeLondon/deep_rl_liquidation}.

\section{Problem formulation}
\label{sec-problem-formulation}
\subsection{Market Impact Model} \label{sec:market_impact_model}
We review the market impact model of \cite{Alfonsi2012} for an agent trading in a risky asset. Let $T\in\mathbb{N}$ be a deterministic time horizon and fix a filtered probability space~\sloppy$(\Omega, \mathcal{F}, (\mathcal{F}_{t})_{t\in[0,T]}, \mathbb{P})$ satisfying the usual conditions and supporting a standard Brownian motion $(W_{t}^{0})_{t\in[0,T]}$. 
We suppose that the agent can trade at the times of an equidistant time grid $\mathbb{T}= \{t_{0},\ldots,t_{N}\}$ where $N\in\mathbb{N}$ and $0= t_{0} < t_{1} <\ldots < t_{N}=T$. A trade in the risky asset at time $t_{k}$ is described by a $\mathcal{F}_{t_{k}}$-measurable random variable $\xi_{t_{k}}$, where positive values denote buys and negative values denote sells.  The agent's goal is to choose an admissible strategy to liquidate a given initial portfolio\footnote{The case of a negative initial portfolio is equivalent up to minor modifications.} $X_{0}>0$.
\begin{definition}
\label{def-admissible-strategy}
An \textit{admissible strategy} is a sequence $\xi = (\xi_{t_{0}},\xi_{t_{1}},\ldots, \xi_{t_{N}})$ of bounded random variables such that each $\xi_{t_{k}}$ is $\mathcal{F}_{t_{k}}$-measurable. For a given initial portfolio $X_{0}$  and time grid $\mathbb{T}$, the set $\mathscr{X}(\mathbb{T},X_{0})$ is the set of admissible strategies $\xi = (\xi_{t_{0}},\ldots, \xi_{t_{N}})$ such that $$X_{0} + \sum_{k=0}^{N}\xi_{t_{k}}=0\quad \mathbb{P}-\text{a.s.}$$
\end{definition}
\noindent
Given an admissible strategy $\xi \in\mathscr{X}(\mathbb{T},X_{0})$, the outstanding number of shares the agent holds at time $t\in[0,T]$ is given by
\begin{equation}
\label{eq-agent-inventory}
X_{t}^{\xi} = X_{0} + \sum_{t_{k}<t}\xi_{t_{k}},
\end{equation}
where $X_{0}^{\xi} = X_{0}$ and $X_{T+}^{\xi} = 0$. 
We assume that the actual asset price will depend on the strategy chosen by the agent. As long as the agent is not active, asset prices are determined by the actions of the other market participants and are described by the  \textit{unaffected price process} $\left(P_{t}^{0}\right)_{t\in[0,T]}$. 
We take the unaffected price process to be
\begin{equation*}
P^{0}_{t} = p_{0} + \sigma_{W} W_{t},\quad P_{0}^{0} = p_{0},
\end{equation*}
where $(W_{t})_{t\in[0,T]}$ represents a standard Brownian motion and $p_{0}, \sigma_{W}>0$. 
When the admissible strategy $\xi\in\mathscr{X}(\mathbb{T},X_{0})$ is used by the agent, the risky asset execution  price $\left(P_{t}^{\xi}\right)_{t\in[0,T]}$ is given by
\begin{equation} \label{eq:execution_price}
    P_{t}^{\xi} := P^{0}_{t} +\sum_{t_{k}<t} G(t-t_{k}) \xi_{t_k}, \quad t\in[0,T],
\end{equation}
for a measurable function $G:[0,\infty)\to (0,\infty)$, the \textit{decay kernel}. The effect of the agent's trades on prices is \textit{transient}, meaning it decays over time. This decay is characterized by the decay kernel which reflects the resilience of price impact over time (see e.g. \citet{Gatheral2012}).
Henceforth, we will work under the following assumption (see also Remark \ref{remark-strictly-convex}). 
\begin{assumption}[Strictly Convex Kernel] \label{assumption-convex-kernel} 
We assume that the decay kernel $G(\cdot)$ is a strictly convex function.
\end{assumption}

\subsection{Optimal Execution Problem} \label{sec:optimal_execution_problem}
Let us now define the profits of an admissible strategy $\xi$. The execution of the $k^{\text{th}}$ order $\xi_{t_{k}}$ will shift the price from $P_{t_{k}}^{\xi}$ to $P_{t_{k}+}^{\xi} = P_{t_{k}}^{\xi} + \xi_{t_{k}} G(0)$. This is equivalent to assuming a ``block shape'' limit order book, where the number of shares available at each price $y$ is given by $G(0)^{-1}dy$ (see \citet[Section 2]{Alfonsi2009}). The costs incurred from executing the amount of shares $\xi_{t_{k}}$ of the risky asset at time $t_k$ are given by 
\begin{equation} \label{eq-returns} \int^{P_{t_{k}+}^{\xi}}_{P_{t_{k}}^{\xi}} yG(0)^{-1}dy = \frac{\left(P_{t_{k}+}^{\xi}\right)^{2}-\left(P_{t_{k}}^{\xi}\right)^{2}}{2G(0)}.
\end{equation}
The expected execution profits over the time period $[0,T]$ are given by
\begin{equation}
\label{eq-expected-costs}
V^{\xi} := -\frac{1}{2G(0)} \mathbb{E}\left[\sum_{k=0}^{N}\left(\left(P_{t_{k}+}^{\xi}\right)^{2}-\left(P_{t_{k}}^{\xi}\right)^{2}\right)\right],
\end{equation}
where the minus sign is introduced by switching from costs to profits. 
Starting from an initial portfolio $X_{0}$, the agent’s goal is to find the optimal strategy $\xi^{*}$ that maximizes the agent's objective: 
\begin{equation}
\label{eq-optimal-execution-problem}
V^{\xi^{*}} = \sup_{\xi\in\mathscr{X}(\mathbb{T},X_{0})} V^{\xi}.
\end{equation}
As shown in \citet[Section 3]{Alfonsi2012}, under Assumption \ref{assumption-convex-kernel} there exists a unique optimal strategy to the agent's optimal execution problem of \eqref{eq-optimal-execution-problem} and it is given by 
\begin{equation} \label{eq:optimal_strategy}
    \xi^{*}_{\text{AAS}} = -\frac{X_{0}}{\bm{1}M^{-1}\bm{1}} M^{-1}\bm{1},
\end{equation}
where $M^{-1}$ is the inverse of the matrix with entries $M_{ij} = G(|t_{i}-t_{j}|)$ and $\bm{1}$ is the $(N+1)$-dimensional vector whose components are all 1. 
\begin{remark}
\label{remark-strictly-convex}
Assumption \ref{assumption-convex-kernel} guarantees the absence of transaction-triggered price manipulation strategies. As shown in \citet[Section 5]{Alfonsi2012}, it is also a sufficient condition to ensure that all trades in an optimal strategy have the same sign. This forbids any pathologically alternation of buy and sell trades once trading a nontrivial amount of shares is required.
\end{remark}

%

\section{Reinforcement Learning}
\label{sec-reinforcement-learning}
RL's ability to learn through trial and reward in dynamic environments makes it ideal for the optimal execution problem we formulated in Section \ref{sec:optimal_execution_problem}. In this section, we first introduce the RL framework and then we turn to the specific details of our algorithm.

\subsection{Markov Decision Processes}
We formalize the optimal execution problem within the language of RL by using the concept of a Markov Decision Process (MDP)~\citep{Puterman1994}. Specifically, a MDP is parametrised by a 4-tuple 
$\langle \mathcal{S},\mathcal{A}, \mathscr{R}, P\rangle$ where $\mathcal{S}$ is the state space, $\mathcal{A}$ is the action space, $\mathscr{R}$ is the reward function and $P$ is the transition probability function. 

The state of the market at each time $t\in\mathbb{T}$ is denoted by $s_{t}$. It captures relevant information about the current market conditions that influence the agent's decision-making process.  It is represented by a 4-tuple as:
\begin{equation*}
s_{t}=\left(t,X_{t}^{\xi}, \xi_{0:(t-1)}, P_{t}^{\xi}\right),\quad t\in\mathbb{T},
\end{equation*}
where $X_{t}^{\xi}$ is the agent's inventory and $P_{t}^{\xi}$ is the execution price. Here, $\xi_{0:(t-1)}$ denotes the agent's past actions between 0 and $t-1$, that is $\xi_{0:(t-1)}=(\xi_{0},\ldots,\xi_{t-1})$ for $t>0$ and it is the empty tuple for $t=0$. 
The state space $\mathcal{S}$ contains all the possible states in which the market is allowed to be at any given time as dictated by the market impact model defined in Section \ref{sec:market_impact_model}.

An action, denoted by $\xi_{t}$, represents the specific trade  undertaken by the agent at time $t\in\mathbb{T}$. Given that the initial inventory $X_{0}$ is positive, it is natural to restrict the set of allowed actions $\xi_{t}$ to the interval $[-X_{t}^{\xi},0]$ for any $t\in\mathbb{T}$. To enforce the terminal inventory constraint, we take the agent's final action to liquidate the remaining inventory, that is $\xi_{T} = -X_T^{\xi}$. The setup just described can be summarised by a state dependent action space $\{\mathcal{A}(s_t)\}_{t\in\mathbb{T}}$ with $\mathcal{A}(s_t) = [-X_{t}^{\xi},0]$ for  $t\in\{t_{0},\ldots,t_{N-1}\}$ and $\mathcal{A}(s_T) = \{-X_T^{\xi}\}$.
A strategy composed of a sequence of actions $\{\xi_t \in \mathcal{A}(s_t)\}_{t\in\mathbb{T}} $ is an admissible strategy in the sense of Definition \ref{def-admissible-strategy}. 

The reward function plays a crucial role in shaping the agent's behavior. As discussed in Section \ref{sec-problem-formulation}, for a given action $\xi_{t}$ at state $s_t$ the agent incurs the costs defined in \eqref{eq-returns}. Therefore, we take the reward function to be
\begin{equation}\label{eq:reward_function}
    \mathscr{R}(s_{t}, \xi_{t}) :=-\frac{\left(P_{t+}^{\xi}\right)^{2}-\left(P_{t}^{\xi}\right)^{2}}{2G(0)}, \quad t\in\mathbb{T},
\end{equation}
with $s_{t}\in\mathcal{S}$ and $\xi_{t}\in\mathcal{A}(s_t)$. 
Here again, the minus sign is introduced by switching from costs to rewards. 
Note that the RHS of~\eqref{eq:reward_function} depends on the state $s_{t}$ and action $\xi_{t}$ through~\eqref{eq:execution_price}.

The transition probability function is the conditional probability $P(s_{t+1}\,|\,s_{t},\xi_{t})$ of transitioning from a current state $s_{t}$ to a new state $s_{t+1}$ after taking an action $\xi_{t}$. It is dictated by the market model described in Section \ref{sec-problem-formulation}.

We focus on strategies that deterministically prescribe a unique action in $\mathcal{A}(s)$ for each state $s\in\mathcal{S}$. We will often refer to these strategies as \textit{policies}. We define an episode $\tau$ to be a sequence of states and actions in the environment, $\tau = (s_{t_{0}}, \xi_{t_{0}}, s_{t_{1}}, \xi_{t_{1}}, ..., s_{t_{N}},  \xi_{t_{N}})$. Finally, each transition in an episode is represented by a tuple given by $(s,a,r,s')$ where $s$ is the state of the market at the beginning of the transition, $a$ is the action taken by the agent, $r= \mathscr{R}(s,a)$ is the reward associated with the transition and $s'$ is the new state.

\subsection{Deterministic Policy Gradient} \label{sec:dpg}
The Deterministic Policy Gradient (DPG) algorithm~\citep{deterministic_policy_gradient_silver} is the foundation for our Deep RL approach. The DPG algorithm is an actor-critic method that relies on \textit{policy evaluation} and \textit{policy improvement} to guide the RL agent towards an optimal strategy.

\noindent\textbf{Policy Evaluation.} This step assesses the performance of the current policy in achieving the agent's objective~\eqref{eq-optimal-execution-problem} by leveraging the \textit{Q-function}.
We define the \textit{return} at any time $t_n\in\mathbb{T}$ as the sum of future rewards
\begin{equation*}
R_{t_n} := \sum_{k=n}^{N}\mathscr{R}\left(s_{t_{k}},a_{t_{k}}\right).
\end{equation*}
The Q-function of a state $s_{t_n}\in\mathcal{S}$ and action $a_{t_n}\in\mathcal{A}(s_{t_n})$ under policy $\xi$ is the expected return when starting in $s_{t_n}$, taking action $a_{t_n}$ and following $\xi$ thereafter
\begin{align} \label{eq:q_function}
Q^{\xi}(s_{t_n},a_{t_n}) := \mathbb{E}_{\xi}\left[R_{t_n} \, | \, s_{t_n},a_{t_n} \right],
\end{align}
where $\mathbb{E}_{\xi}[\cdot]$ denotes the expected value of a random variable given that the agent follows policy $\xi$. 
We define the  \textit{optimal Q-function} as the Q-function for the optimal strategy $\xi^{*}$, i.e. $Q^{*}(s,a) := Q^{\xi^{*}}(s,a)$.

In the DPG algorithm, the critic learns the optimal Q-function $Q^{*}(s,a)$ through a suitable function approximator $Q_{\phi}(s,a):\mathcal{S} \times \mathcal{A}\to \mathbb{R}$ with a real parameter vector $\phi$. This is possible because the optimal Q-function satisfies the following Bellman equation \citep[Chapter 4]{Sutton_Barto},
\begin{multline}
\label{eq-bellman-equation}
Q^{*}(s,a) = \\ \mathbb{E}\left[\mathscr{R}(s,a) + \max_{a'\in\mathcal{A}\left(s'\right)} Q^{*}\left(s',a'\right)  \, \Big| \, s, a\right],
\end{multline}
with $s'\sim P(s'\, |\, s,a)$. 
Notice that the optimal action for any state $s\in\mathcal{S}$, denoted by $a^{*}(s)$, can be found by maximising $Q^{*}(s,a)$ over the action space $\mathcal{A}(s)$.

\noindent\textbf{Policy Improvement.} In the DPG algorithm,  we consider a deterministic policy $\xi_{\theta}:\mathcal{S} \to \mathcal{A}$ with a real parameter vector $\theta$ which approximates the optimal action $a^{*}(s)$ for any state $s\in\mathcal{S}$. Through policy improvement the agent refines its policy to maximize the expected execution profits $V^{\xi_{\theta}}$ defined in~\eqref{eq-expected-costs}. 
By adjusting the parameters $\theta$ in the direction of the objective gradient $\nabla_{\theta} V^{\xi_{\theta}}$, the performance of the policy $\xi_{\theta}$ can be repeatedly improved. 

As shown in \citet{deterministic_policy_gradient_silver}, the gradient $\nabla_{\theta} V^{\xi_{\theta}}$ can be expressed in closed form in terms of the Q-function~\eqref{eq:q_function} under the policy $\xi_{\theta}$. Let us denote by $P(s_{t_0}\to s', k,\xi)$ the probability of going from state $s_{t_0}$ to state $s'$ in $k$ steps under policy $\xi$. Moreover, let $\rho^{\xi}(s'):= \sum_{k=1}^{N+1}P(s_{t_0}\to s', k,\xi) $ be the (improper) marginal distribution of $s'$ given policy $\xi$. Lastly, denote by $\mathbb{E}_{s\sim\rho^{\xi}}[\cdot]$ the (improper) expected value with respect to the state distribution $\rho^{\xi}$. 
As shown in \citet[Theorem 1]{deterministic_policy_gradient_silver}, the gradient of $V^{\xi_{\theta}}$ with respect to the policy's parameters $\theta$ is given by
\begin{multline}
\label{eq-policy-gradient}
\nabla_{\theta} V^{\xi_{\theta}} = \\ \mathbb{E}_{s\sim\rho^{\xi_{\theta}}}\left[\nabla_{\theta}\xi_{\theta}(s)\nabla_{a}Q^{\xi_{\theta}}(s,a)\big|_{a=\xi_{\theta}(s)}\right].
\end{multline}
In the context of our paper, we estimate the policy gradient \textit{off-policy} from trajectories sampled from a distinct behavior policy $\beta(s) \neq \xi_{\theta}(s)$. Following Section 4.2 of \citet{deterministic_policy_gradient_silver}, we modify the agent's objective to be 
\begin{equation*}
V_{\beta}^{\xi_{\theta}}  = \int_{\mathcal{S}} \rho^{\beta}(s) Q^{\xi_{\theta}}\left(s, \xi_{\theta}\left(s\right)\right) ds.
\end{equation*}
As discussed in \citet[Section 4.2]{deterministic_policy_gradient_silver}, the gradient of $V_{\beta}^{\xi_{\theta}}$ with respect to the policy's parameters $\theta$ can be approximated by
\begin{multline*}
\nabla_{\theta} V_{\beta}^{\xi_{\theta}} \approx \\\mathbb{E}_{s\sim\rho^{\beta}}\left[\nabla_{\theta}\xi_{\theta}(s)\nabla_{a}Q^{\xi_{\theta}}(s,a)\big|_{a=\xi_{\theta}(s)}\right].
\end{multline*}


\noindent\textbf{A Trick To Help Convergence.} 
We can simplify the optimization problem and facilitate the convergence of the algorithm by considering an \textit{auxiliary Q-function} (see Section \ref{sec:convergence_experiment}). 
For the discussion that follows, it is convenient to define the projection operator $\hat{\pi}$ which projects any state $s_t\in\mathcal{S}$ to a 3-tuple corresponding to the first three elements of the state $s_{t}$, that is $\hat{\pi}(s_t) :=\left(t,X_{t}^{\xi},\xi_{0:(t-1)}\right)$. We adopt $\hat{\pi}(\mathcal{S})$ to denote the space of the projections of all the states in $\mathcal{S}$.
Let us define the {auxiliary Q-function}, $\mathcal{Q}^{\xi}:\hat{\pi}(\mathcal{S})\times\mathcal{A} \to \mathbb{R}$, with
\begin{equation}
\label{eq-auxiliary-q-function}
    \mathcal{Q}^{\xi}(\hat{\pi}(s), a) := Q^{\xi}(s, a) - x \cdot p_0,
\end{equation}
for $s\in\mathcal{S}$, $a\in\mathcal{A}(s)$ and $\xi\in\mathscr{X}(\mathbb{T},X_{0})$. Here, $x$ is the agent's inventory corresponding to state $s$ and $p_{0}$ is the initial price of the risky asset as defined in Section \ref{sec-problem-formulation}. 
It is straightforward to show, using~\eqref{eq:q_function} and~\eqref{eq-auxiliary-q-function}, that the auxiliary Q-function only depends on the restricted state space identified by $\hat{\pi}$.
This reduces the complexity of the input space in two ways, (1) by reducing the number of inputs from four to three, and (2) by considering inputs that are deterministic in nature. 
Henceforth, we will also consider the Q-function and optimal action approximators whose inputs are restricted by the projection operator $\hat{\pi}$, that is $Q_{\phi}(\hat{\pi}(s), a) : \hat{\pi}(\mathcal{S}) \times \mathcal{A} \to \mathbb{R}$  and $\xi_{\theta}:\hat{\pi}(\mathcal{S}) \to \mathcal{A}$.

Let us define the \textit{optimal auxiliary Q-function} as the auxiliary Q-function for the optimal strategy $\xi^{*}$, that is $\mathcal{Q}^{*}(\hat{\pi}(s),a) := \mathcal{Q}^{\xi^{*}}(\hat{\pi}(s),a)$. 
Notice that $Q^{*}(s,\cdot)$ and $\mathcal{Q}^{*}(\hat{\pi}(s),\cdot)$ share the same optimal action $a^{*}(\hat{\pi}(s))$ for any state $s\in\mathcal{S}$.
We wish to show that the auxiliary Q-function satisfies a Bellman equation as well as a policy gradient theorem. These two steps guarantee that $\mathcal{Q}^{\xi}$ can replace $Q^{\xi}$ in the DPG algorithm and that an optimal policy can be learnt from it. Our first result shows that the auxiliary Q-function also satisfies a Bellman equation.
\begin{proposition}[Auxiliary Bellman Equation]
\label{prop-q-learning}
Let $\mathcal{Q}^{*}$ be the optimal auxiliary Q-function. Then, $\mathcal{Q}^{*}$ satisfies the following Bellman equation
\begin{multline} \label{eq-auxiliary-bellman}
\mathcal{Q}^{*}(\hat{\pi}(s),a) = \mathbb{E}\Big[\mathscr{R}(s,a)+ a\cdot p_0\\   +  \max_{a'\in\mathcal{A}\left(s'\right)}\mathcal{Q}^{*}\left(\hat{\pi}\left(s'\right),a'\right) \, \Big| \, s, a \Big],
\end{multline}
with $s'\sim P(s' \, | \, s ,a)$ and for any $s\in\mathcal{S}$ and $a\in\mathcal{A}(s)$.
\end{proposition}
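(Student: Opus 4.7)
The plan is to derive the auxiliary Bellman equation directly from the standard Bellman equation \eqref{eq-bellman-equation} satisfied by $Q^{*}$, by substituting the definition \eqref{eq-auxiliary-q-function} of $\mathcal{Q}^{*}$ and exploiting the deterministic update of the inventory component of the state.

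Concretely, fix $s\in\mathcal{S}$ with inventory $x = X_t^\xi$ and let $a\in\mathcal{A}(s)$. By \eqref{eq-auxiliary-q-function} we can write $Q^{*}(s,a) = \mathcal{Q}^{*}(\hat{\pi}(s),a) + x\cdot p_0$, so that rearranging \eqref{eq-bellman-equation} yields
\begin{equation*}
\mathcal{Q}^{*}(\hat{\pi}(s),a) = \mathbb{E}\!\left[\mathscr{R}(s,a) + \max_{a'\in\mathcal{A}(s')} Q^{*}(s',a') \,\Big|\, s,a\right] - x\cdot p_0 .
\end{equation*}
Applying \eqref{eq-auxiliary-q-function} again at the successor state $s'$, and writing $x'$ for its inventory component, gives $Q^{*}(s',a') = \mathcal{Q}^{*}(\hat{\pi}(s'),a') + x'\cdot p_0$. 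Since $x'$ does not depend on the maximisation variable $a'$, the constant $x'\cdot p_0$ factors out of the max.

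The key observation is that the inventory transition is deterministic given $(s,a)$: by \eqref{eq-agent-inventory} we have $x' = x + a$ almost surely. Hence $x'\cdot p_0 = x\cdot p_0 + a\cdot p_0$, which can be pulled out of the conditional expectation. Substituting back and cancelling $x\cdot p_0$ on both sides yields
\begin{equation*}
\mathcal{Q}^{*}(\hat{\pi}(s),a) = \mathbb{E}\!\left[\mathscr{R}(s,a) + a\cdot p_0 + \max_{a'\in\mathcal{A}(s')}\mathcal{Q}^{*}(\hat{\pi}(s'),a') \,\Big|\, s,a\right],
\end{equation*}
which is \eqref{eq-auxiliary-bellman}.

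There is no deep obstacle here; the argument is essentially algebraic once one notices that the correction term $x\cdot p_0$ is linear and that $x$ evolves deterministically via $x' = x+a$. The only subtlety worth flagging is the boundary case $t = T$: the maximisation on the right-hand side is vacuous (with the usual convention that $\mathcal{Q}^{*} \equiv 0$ beyond horizon), and one should verify that \eqref{eq-bellman-equation} is interpreted with the same convention so that the substitution above goes through at the terminal step, where $\mathcal{A}(s_T) = \{-X_T^\xi\}$ forces $a = -x$ and therefore $a\cdot p_0 = -x\cdot p_0$, consistent with the trivial boundary value of $\mathcal{Q}^{*}$.
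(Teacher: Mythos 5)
Your proof is correct and follows essentially the same route as the paper's: substitute $Q^{*} = \mathcal{Q}^{*} + x\cdot p_0$ into the standard Bellman equation, pull the constant $x'\cdot p_0$ out of the maximisation, and use the deterministic inventory update $x' - x = a$ to conclude. Your additional remark on the terminal-step convention is a harmless refinement not present in the paper.
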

The proof of Proposition \ref{prop-q-learning} is postponed to Appendix \ref{app-proof}. In Algorithm~\ref{alg_ddpg}, the critic will exploit the Bellman equation \eqref{eq-auxiliary-bellman} to estimate the optimal auxiliary Q-function $\mathcal{Q}^{*}$ through a suitable function approximator $\mathcal{Q}_{\phi}$ with parameters $\phi$ (see \eqref{eq-msbe-loss}). We now turn to our next result, which follows immediately from \eqref{eq-policy-gradient} and \eqref{eq-auxiliary-q-function}.
\begin{proposition}[Auxiliary Policy Gradient]
\label{prop-policy-gradient}
Let $\mathcal{Q}^{\xi}$ be defined as in \eqref{eq-auxiliary-q-function}. Then,
\begin{multline*}
\nabla_{\theta} V^{\xi_{\theta}} = \\ \mathbb{E}_{s\sim\rho^{\xi_{\theta}}}\left[\nabla_{\theta}\xi_{\theta}(\hat{\pi}(s))\nabla_{a}\mathcal{Q}^{\xi_{\theta}}\left(\hat{\pi}\left(s\right),a\right)\big|_{a=\xi_{\theta}\left(\hat{\pi}\left(s\right)\right)}\right].
\end{multline*}
\end{proposition}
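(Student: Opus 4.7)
The plan is to deduce the result directly from the standard deterministic policy gradient formula \eqref{eq-policy-gradient} together with the defining identity \eqref{eq-auxiliary-q-function} of the auxiliary Q-function. The key observation is that the corrective term $x\cdot p_0$ in \eqref{eq-auxiliary-q-function} is a function of the state $s$ alone (through the inventory $x = X_t^{\xi}$ and the deterministic constant $p_0$) and is independent of the action $a$, so it vanishes under the action-gradient $\nabla_a$ that appears inside \eqref{eq-policy-gradient}.

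Concretely, first I would rewrite \eqref{eq-auxiliary-q-function} as
$$Q^{\xi_\theta}(s,a) \;=\; \mathcal{Q}^{\xi_\theta}(\hat{\pi}(s),a) + x\cdot p_0,$$
and differentiate both sides in $a$ to obtain the pointwise identity
$$\nabla_a Q^{\xi_\theta}(s,a) \;=\; \nabla_a \mathcal{Q}^{\xi_\theta}(\hat{\pi}(s),a),$$
valid for every $s\in\mathcal{S}$ and $a\in\mathcal{A}(s)$. Second, I would invoke the convention already introduced in the text that the policy $\xi_\theta$ can equivalently be written as a function of the projected state $\hat{\pi}(s)$, so that $\xi_\theta(s) = \xi_\theta(\hat{\pi}(s))$ and consequently $\nabla_\theta \xi_\theta(s) = \nabla_\theta \xi_\theta(\hat{\pi}(s))$. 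Substituting both identities into the right-hand side of \eqref{eq-policy-gradient} yields the claimed expression.

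There is no genuine obstacle in the proof; the content of Proposition~\ref{prop-policy-gradient} is essentially a change of variables. The only subtlety worth addressing explicitly is ensuring that the state distribution $\rho^{\xi_\theta}$ on the right-hand side is unchanged after rewriting the policy in terms of $\hat{\pi}(s)$. This is immediate because $\rho^{\xi_\theta}$ is determined by the transition kernel together with the distribution over actions prescribed by $\xi_\theta$, and by construction the restricted-input policy prescribes the very same action at every state $s$ as the original policy; hence the trajectory law is identical and so is the induced marginal $\rho^{\xi_\theta}$. With this remark in place, the proposition follows in one line and can be stated as such in the appendix alongside the proof of Proposition~\ref{prop-q-learning}.
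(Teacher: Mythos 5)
Your argument is correct and is exactly the one the paper intends: the text states that Proposition~\ref{prop-policy-gradient} ``follows immediately from \eqref{eq-policy-gradient} and \eqref{eq-auxiliary-q-function},'' and your observation that the corrective term $x\cdot p_0$ is action-independent (hence killed by $\nabla_a$), combined with writing the policy as a function of $\hat{\pi}(s)$, is precisely that one-line deduction. No discrepancy with the paper's (implicit) proof.
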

Similarly, it holds that
\begin{multline*}
\nabla_{\theta} V_{\beta}^{\xi_{\theta}} \approx\\ \mathbb{E}_{s\sim\rho^{\beta}}\left[\nabla_{\theta}\xi_{\theta}(\hat{\pi}(s))\nabla_{a}\mathcal{Q}^{\xi_{\theta}}\left(\hat{\pi}\left(s\right),a\right)\big|_{a=\xi_{\theta}\left(\hat{\pi}\left(s\right)\right)}\right],
\end{multline*}
which can be used to estimate the policy gradient in an off-policy fashion. 


\subsection{Deep Deterministic Policy Gradient} 
\label{sec-deep-dpg}
In RL, a key challenge is evaluating the Bellman equation to find the optimal action for a given state. In algorithms with discrete action spaces, e.g. the DDQN algorithm, the agent computes the Q-function for each available action by using the Bellman equation and chooses the one with the highest value by performing a maximization over a finite set. 
The situation becomes more complex when dealing with continuous action spaces as the number of possible actions is infinite and exhaustively evaluating the Q-function for every element within this continuous space is computationally impractical. 
The DPG algorithm described in Section~\ref{sec:dpg} addresses the challenges of continuous action spaces by learning a parameterized representation of the optimal policy, i.e. the sequence of optimal actions, and the optimal Q-function. Building upon the DPG algorithm, the DDPG algorithm~\citep{deep_dpg_lillicrap} approximates the optimal policy and the optimal Q-function with deep neural networks. We summarise the training scheme in Algorithm~\ref{alg_ddpg} and describe it in more details below.

\begin{algorithm}[p!]
    \caption{Deep Deterministic Policy Gradient for Optimal Execution Strategies}
    \label{alg_ddpg}
\begin{algorithmic}[1]
    \STATE Input: Initialize actor network's parameters $\theta$ and critic network's parameters $\phi$.
    \STATE Set target networks' parameters equal to policy networks' parameters $\theta^{\text{target}} \leftarrow \theta$, $\phi^{\text{target}} \leftarrow \phi$
    \FOR{$H$ episodes}
        \STATE Initialize state $s_{t_0}$.
        \STATE Initialize OU noise process $\varepsilon_0(\theta_{\varepsilon}, \sigma_{\varepsilon})$ and set $i = 0$.
        \FOR{$n = 0, ..., N$}
            \IF{$t_n < T$}
                \STATE \textbf{if} rand(1) $\leq \varepsilon$: Execute noisy action,
                \begin{equation*}
                \quad\quad a_{t_n} = -X_{t_n}^{\xi_{\theta}}\cdot\sigma\left(\xi_{\theta}\left(\hat{\pi}\left(s_{t_n}\right)\right) + \varepsilon_i\left(\theta_{\varepsilon}, \sigma_{\varepsilon}\right)\right),
                \end{equation*}
            and update $i \leftarrow i + 1$. 
                \STATE \textbf{else}: Execute action,
                \begin{equation*}
                a_{t_n} = - X_{t_n}^{\xi_{\theta}}\cdot \sigma\left(\xi_{\theta}\left(\hat{\pi}\left(s_{t_n}\right)\right)\right).
                \end{equation*}
            \ELSE
                \STATE  Execute action, $a_{T} = -X_{T}^{\xi_{\theta}}$.
            \ENDIF
            \STATE Observe reward $r_{t_n}$ and next state $s_{t_{n+1}}$. If $s_{t_{n+1}} $ is terminal, set $d_{t_n} = 1$, otherwise set $d_{t_n} = 0$.
            \STATE Store episode $(s_{t_n}, a_{t_n}, r_{t_n}, s_{t_{n+1}}, d_{t_{n}})$ in memory buffer $\mathcal{M}$.
            \IF{$|\mathcal{M}| \geq D$} 
                \STATE Create experience replay buffer $\mathcal{D}$ by selecting the most recent $D$ episodes from $\mathcal{M}$.
                \STATE Randomly sample batch $\mathcal{B} = \{ (s, a, r, s', d) \}$ of size $B$ from $\mathcal{D}$.
                \STATE Compute target Q-function, $y(a, r,s',d)$, for each transition $(s, a, r, s', d)$ in $\mathcal{B}$ as shown in Equation~\eqref{eq:target_q_function}.
                \STATE Update critic network by one step of gradient descent using
                \begin{equation*}
                \quad\quad\nabla_{\phi} \frac{1}{B}\sum_{\substack{(s,a,r,  s',d)\\
                  \in \mathcal{B}}} \left( \mathcal{Q}_{\phi}\left(\hat{\pi}(s),a\right) - y\left(a, r,s',d\right) \right)^2
                \end{equation*}
                \STATE Update actor network by one step of gradient descent using
                \begin{equation*}
                \quad\quad- \nabla_{\theta} \frac{1}{B}\sum_{s \in \mathcal{B}}\mathcal{Q}_{\phi}\left(\hat{\pi}(s), \xi_{\theta}\left(\hat{\pi}\left(s\right)\right)\right)
                \end{equation*}
                \STATE Update target networks with
                \begin{align*}
                \phi^{\text{target}} &\leftarrow \tau_{\phi} \phi^{\text{target}} + (1-\tau_{\phi}) \phi \\
                \theta^{\text{target}} &\leftarrow \tau_{\theta} \theta^{\text{target}} + (1-\tau_{\theta}) \theta
                \end{align*}
            \ENDIF
            \IF{$s_{t+1}$ is not terminal}
                \STATE Set $s_{t_n} \leftarrow s_{t_{n+1}}$.
            \ELSE
                \STATE Exit.
            \ENDIF
        \ENDFOR
    \ENDFOR    
\end{algorithmic}
\end{algorithm}

\noindent\textbf{Actor and Critic Networks Architectures.}
The actor network, denoted by $\xi_{\theta}(\hat{\pi}(s))$, is a neural network with parameters $\theta$. It takes the projected state, $\hat{\pi}(s)$, as input and outputs an approximation of the optimal action $a^*(\hat{\pi}(s))$. The critic network, denoted by $\mathcal{Q}_{\phi}(\hat{\pi}(s),a)$ and parameterized by a real vector $\phi$, takes as input a pair $(\hat{\pi}(s),a)$ and outputs an approximation of the optimal auxiliary Q-function $\mathcal{Q}^{*}(\hat{\pi}(s),a)$.
The critic and actor networks are fully-connected neural networks (FCN) that use the rectified linear unit (ReLU) activation function. The actor network's output is transformed using the sigmoid function, denoted by $\sigma(\cdot)$, and multiplied by the negative remaining inventory (see line 9 in Algorithm~\ref{alg_ddpg}). This ensures that the transformed output is an action value within the action space.

\noindent\textbf{Main and Target Networks.}
Using the same network for both generating the target Q-function and updating the current Q-function leads to gradient oscillations~\citep{Hasselt_Ddqn}. Target networks are introduced for the target Q-function generation. The parameters of the critic and actor target networks are denoted by $\phi^{\text{target}}$ and $\theta^{\text{target}}$, respectively. The target networks are updated once per main network update with Polyak averaging (see line 20 in Algorithm~\ref{alg_ddpg}).

\noindent\textbf{Training scheme.}
A memory buffer, denoted as ${\mathcal M} = \{(s,a,r,s')\}$, stores all past transitions experienced by the agent. To enable experience replay, a subset of the most recent $D$ transitions is drawn from the memory buffer to form the experience replay buffer, denoted by ${\mathcal D}$. 
The training process does not begin until the memory buffer contains at least $D$ entries. 
In our setting, episodes where liquidation occurs before the terminal step ($T$) violate the problem's assumptions and are therefore excluded from the memory buffer.
The cost function to be minimized by the critic network is the following mean-squared Bellman error (MSBE) loss,
\begin{multline}
\label{eq-msbe-loss}
\mathcal{L}^{\text{critic}}(\phi, {\mathcal D}) =\\ \underset{(s,a,r,s') \sim {\mathcal D}}{{\mathbb{E} }}\left[
    \left( \mathcal{Q}_{\phi}\left(\hat{\pi}\left(s\right),a\right) -  y\left(a,r,s',d\right)\right)^2
    \right],
\end{multline}
where 
\begin{multline} \label{eq:target_q_function}
 y(a,r,s',d)= r +  a\cdot p_{0} \\ + \left(1-d\right) \cdot \mathcal{Q}_{\phi^{\text{target}}}\left(\hat{\pi}\left(s'\right), \xi_{\theta^{\text{target}}}\left(\hat{\pi}\left(s'\right)\right)\right),
\end{multline}
and $d$ is the binary done signal to indicate whether $s'$ is terminal.
The cost function to be minimized by the actor network is
\begin{multline*}
\mathcal{L}^{\text{actor}}(\theta, {\mathcal D}) = \\- \underset{s \sim {\mathcal D}}{{\mathbb{E}}}\left[ \mathcal{Q}_{\phi}\left(\hat{\pi}\left(s\right), \xi_{\theta}\left(\hat{\pi}\left(s\right)\right)\right) \right],
\end{multline*}
where the critic network parameters $\phi$ are treated as constants. 

As in~\citet{deep_dpg_lillicrap}, we use a Ornstein-Uhlenbeck (OU) noise process with parameters $\theta_{\varepsilon}$ and $\sigma_{\varepsilon}$ and denoted at step $i$ by $\varepsilon_i(\theta_{\varepsilon}, \sigma_{\varepsilon})$. The noise is added on the unscaled output of the actor network, i.e., before the sigmoid function is applied (see line 8 in Algorithm~\ref{alg_ddpg}). The exploration probability at each iteration is denoted by $\varepsilon$.

\noindent\textbf{State and Action inputs normalization.} The state input is $\hat{\pi}(s_t) = \left(t,X_{t}^{\xi},\xi_{0:(t-1)}\right)$. The past actions $\xi_{0:(t-1)}$ are implemented by using a vector of fixed size $N+1$ which is initialized with $0$ and filled with the executed actions as the episode unfolds. 
Furthermore, we normalize the inputs of the critic and actor networks. Specifically, the state projection $\hat{\pi}(s_t)$ at time $t\in\mathbb{T}$ is normalized as $\left(t / T, X^{\xi}_{t} / X_0, \xi_{0:(t-1)}/{X_0}\right)$. The action at time $t\in\mathbb{T}$ is also normalized as $a_{t} / X_0$. 

\section{Numerical Results}
\label{sec-numerical-experiment}

\subsection{Set-up for numerical experiments} \label{sec:experiment_set_up}
\textbf{Environment set-up.} In our numerical experiments, the DDPG agent is trading in the financial market described in Section~\ref{sec-problem-formulation} with the unaffected price process volatility $\sigma_{W}=0.0001$.  The initial price is set to $p_0 = 50$, the initial inventory to $X_0 = 10$, and the time horizon to $N = 9$, such that the optimal execution strategy consists of $10$ trades. 

\noindent\textbf{DDPG algorithm set-up.} The hyperparameters for the DDPG algorithm are selected through a grid search procedure, prioritizing configurations that yielded the best performance. 
The critic and actor neural networks used by the DDPG agent are trained using PyTorch version 2.2.1+cu118. The critic FCN has $L^{\text{critic}} = 14$ hidden layers and $n_h^{\text{critic}} = 64$ units per hidden layer. The actor FCN employs $L^{\text{actor}} = 10$ hidden layers with $n_h^{\text{actor}} = 54$ units per hidden layer. Both the critic and the actor networks use PyTorch's default weight initialization (Xavier~\citep{pmlr-v9-glorot10a} for weights, zero for biases). The Adam optimizer~\citep{kingma2017adam} is used to update the parameters of the actor and critic networks. The learning rate of the critic network's optimizer is set to $\ell^{\text{critic}} = 5\times 10^{-4}$ and the learning rate of the actor network's optimizer is set to $\ell^{\text{actor}} = 5\times 10^{-5}$. We set the update rate for the target critic and target actor networks to $\tau_{\phi} = \tau_{\theta} = 0.005$. The OU noise process drift parameter is set to $\theta_{\varepsilon} = 0.15$.

\begin{figure*}[t!]
\centering
\includegraphics[width=1\textwidth]{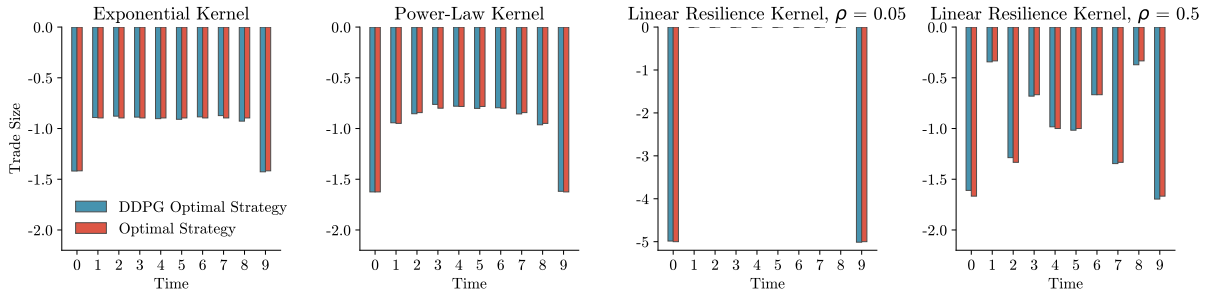}
\caption{ \textbf{Optimal Execution Strategies for Exponential Kernel, Power Law Kernel and Linear Resilience Kernel.} Blue bars represent the optimal strategy estimated by the DDPG agent and the red bars are the true optimal strategy (Equation~\eqref{eq:optimal_strategy}).}
\label{fig-convergence-strategies}
\end{figure*}

\subsection{Convergence to Optimal Execution Strategy} \label{sec:convergence_experiment}

\begin{figure}[b!]
\centering
\includegraphics[width=0.5\textwidth]{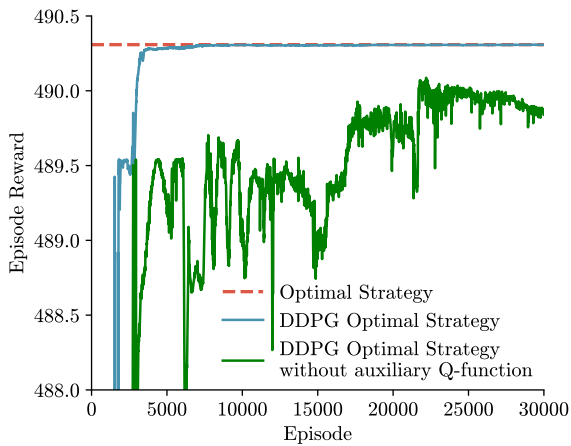}
\caption{\textbf{Advantage of Auxiliary Q-function.} Episode reward of the true optimal strategy (red), the optimal strategy estimated by the DDPG trained with the auxiliary Q-function~\eqref{eq-auxiliary-bellman} (blue) and trained with the standard Q-function~\eqref{eq-bellman-equation} (green). The experiment design is the same as that explained in Section~\ref{sec:convergence_experiment} for the exponential kernel.}
\label{fig-auxiliary-Bellman}
\end{figure}

The performance of the DDPG algorithm is tested on four different decay kernels, (1) the exponential kernel $G(t) = \kappa \exp(-\rho t)$ with $\kappa = \rho = 1$, (2) the power-law kernel $G(t) = \kappa (1+t)^{-\rho}$ with $\kappa = \rho = 1$, (3) the linear resilience kernel $G(t) = \kappa (1-\rho t)^{+}$ with $\kappa = 1$ and $\rho = 0.05$ and (4) the linear resilience kernel with $\kappa = 1$ and $\rho = 0.5$. The training for each decay kernel form was performed independently.

The DDPG agent undergoes training over $H = 30,000$ training episodes which corresponds to $H \times (N+1) = 300,000$ transitions. The replay batch size is set to $B = 1,000$ transitions. Different environments may necessitate varying replay buffer sizes for optimal performance~\citep{Zhang_2017}. In our experiments, we selected a replay buffer size of $D=15,000$ for the exponential and the power-law kernels, $D=10,000$ for linear resilience kernel with $\rho = 0.5$, and $D=20,000$ for linear resilience kernel with $\rho = 0.05$. Lastly, the probability of exploring and the OU noise process variance parameter are set to $\varepsilon=1$ and $\sigma_{\varepsilon} = 0.2$, respectively. 

Figure~\ref{fig-convergence-strategies} showcases the ability of the DDPG for Optimal Execution Strategies algorithm (Algorithm~\ref{alg_ddpg})  to estimate an optimal strategy (blue bars) aligning with the true optimal strategy (Equation~\ref{eq:optimal_strategy}) (red bars).
Here, we work under Assumption~\ref{assumption-convex-kernel} and we demonstrate the convergence across popular strictly convex kernels. 
In our experiments, we can determine the true optimal strategy because we have access to the exact decay kernel that has been used to generate the data. However, in real-world scenarios, the true decay kernel is typically unknown. To apply the closed-form solution in such cases, one must make assumptions about the parametric form of the decay kernel and estimate its parameters from market data. This approach aligns with the traditional methodology, as seen in works like ~\citet{Obizhaeva2013}. In contrast, our approach bypasses the need for such assumptions.

Figure~\ref{fig-auxiliary-Bellman} demonstrates that incorporating the auxiliary Q-function (introduced in Section~\ref{sec:dpg}) significantly improves the convergence of the DDPG algorithm. 
Specifically, using the exponential kernel as an example, we observe that the algorithm equipped with the auxiliary Q-function converges to the optimal strategy both faster and in a more stable manner.

\subsection{Online Learning in a Dynamic Environment} \label{sec:online_experiment}
The DDPG algorithm is capable of learning both within and across episodes.
In this experiment, we evaluate the DDPG agent's adaptation in a dynamic market environment. The experiment spans a period of $H=100,000$ episodes, during which the exponential kernel decay parameter, $\rho$, continuously changes\footnote{We note that the optimal strategy of \eqref{eq:optimal_strategy} is invariant under a change of the scale parameter $\kappa$ of the exponential kernel.}. We consider two scenarios where this parameter either linearly decreases from $1$ to $0.5$ or linearly increases from $1$ to $1.5$ (Figure~\ref{fig-dynamic-environment}a).
The experiment was performed independently for each scenario.

The DDPG agent leverages the accumulated memory buffer and the weights pre-trained during the convergence experiment (Section~\ref{sec:convergence_experiment}). The degree of exploration is controlled by the probability of exploring $\varepsilon=0.2$ and the OU noise process variance parameter $\sigma_{\varepsilon} = 0.14$. 

\begin{figure*}[t!]
\centering
\includegraphics[width=0.7\textwidth]{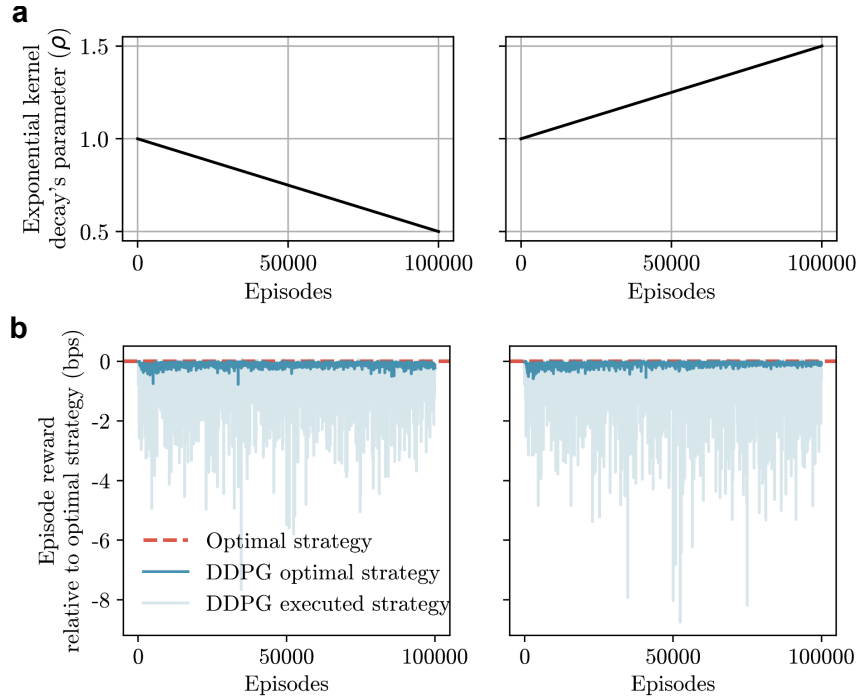}
\caption{\textbf{Online Learning in a Dynamic Environment.} (a) Exponential kernel decay parameter in a dynamic environment for two scenarios. (b) Episode reward of the estimated optimal strategy (dark blue) and the executed strategy (light blue) of the DDPG agent relative to the episode reward of the true optimal strategy (Equation~\eqref{eq:optimal_strategy}) (red).}
\label{fig-dynamic-environment}
\end{figure*}

In Figure~\ref{fig-dynamic-environment}b we measure the relative difference between the episode reward associated with the DDPG agent's strategy (light and dark blue lines) and those associated with the true optimal strategy (red line) in basis points (1 bps = 0.01\%) per episode. 
The light blue line represents the actual episode reward achieved by the DDPG agent's executed strategy. This includes the impact of exploration, where the agent tries non-optimal actions. In contrast, the dark blue line shows the hypothetical episode reward the DDPG agent could have achieved if it could exploit its learned optimal strategy at the end of each episode.
The light blue line closely tracks the optimal red line, showcasing the DDPG agent's ability to learn online. The small and stable deviation observed between the dark blue and the light blue line (executed strategy) is attributed to exploration noise.

\section{Conclusion}
\label{sec-conclusions}
This paper proposes a deep RL approach based on the DDPG algorithm to tackle the optimal execution problem. Our experiments demonstrate the effectiveness of DDPG in approximating both the Q-function and the continuous action policy, enabling efficient execution strategies. 
Through numerical experiments, our DDPG algorithm demonstrated the ability to learn the optimal execution strategy in environments governed by different decay kernels. Additionally, the algorithm showed a capacity to adapt online to the dynamic nature of real-world financial markets, simulated through varying kernel's parameters.
Our findings highlight that the algorithm's parameters are environment-dependent, and tuning may be required to achieve optimal performance in different settings.
Future work could extend the current model to encompass trading across a basket of stocks. Additionally, exploring the integration of exogenous signals into the optimal execution strategy holds potential for enhancing decision-making capabilities.



 



%
%
%
%

\section{Acknowledgments}
We acknowledge computational resources and support provided by the Imperial College Research Computing Service (http://doi.org/10.14469/hpc/2232).

%
%
%
%
\bibliographystyle{ACM-Reference-Format}
\bibliography{ref}

%
%
%
%

\appendix


\section{Proofs}
\label{app-proof}
\begin{proof}[Proof of Proposition \ref{prop-q-learning}]
Let $s\in\mathcal{S}$ and $a\in\mathcal{A}(s)$ and let $P(s'|s,a)$ be the transition probability function according to which the states evolve.
Recall that the optimal Q-function $Q^{*}$ satisfies the Bellman equation of \eqref{eq-bellman-equation}. By using the definition of $\mathcal{Q}^{*}$ from \eqref{eq-auxiliary-q-function} in \eqref{eq-bellman-equation} we obtain
\begin{multline}
\label{eq-proof-q-star-replaced}
\mathcal{Q}^{*}(\hat{\pi}(s),a) + x\cdot p_{0}= \\\mathbb{E}_{s,a}\Big[\mathscr{R}(s,a) + \max_{a'\in\mathcal{A}(s')} \left(\mathcal{Q}^{*}(\hat{\pi}(s'),a') + x'\cdot p_{0}\right)\Big]
\end{multline}
where $s'\sim P(s' \, | \, s,a)$ and $x$ and $x'$ are the agent's inventories corresponding to the states $s$ and $s'$, respectively.
Notice that the term $x'\cdot p_{0}$ can be pulled out of the maximization in the right-hand side of \eqref{eq-proof-q-star-replaced}.
By using doing so and then rearranging the terms we get
\begin{multline}
\label{eq-proof-bellman-before-substitution}
\mathcal{Q}^{*}(\hat{\pi}(s),a) = \\\mathbb{E}_{s,a}\Big[\mathscr{R}(s,a) +(x'-x)\cdot p_{0}\\ + \max_{a \in
\mathcal{A}(s')} \mathcal{Q}^{*}(\hat{\pi}(s'),a') \Big]
\end{multline}
As dictated by the market model in Section \ref{sec:market_impact_model}, when the state $s$ transitions to $s'$, the agent's inventory evolves deterministically based on \eqref{eq-agent-inventory}.  It follows immediately from \eqref{eq-agent-inventory} that $x'-x = a$
which can be plugged in \eqref{eq-proof-bellman-before-substitution} to obtain the desired Bellman equation for $\mathcal{Q}^{*}$. This completes the proof.
\end{proof}

\clearpage
\newpage


\end{document}